\newtheorem{theorem}{Theorem}[section]
\title{Zero-Overhead Introspection\\ for Adaptive Test-Time Compute}
\author{
\quad \quad \quad
Rohin Manvi\footnotemark[2] \footnotemark[4] \thanks{Corresponding author, \texttt{rohinm@berkeley.edu}}
\And
Joey Hong\footnotemark[2]\thanks{UC Berkeley} 
\And
Tim Seyde\footnotemark[3]\thanks{MIT CSAIL} \footnotemark[4]\thanks{Liquid AI} 
\quad \quad \quad \quad \quad
\AND
\quad \quad \quad Maxime Labonne\footnotemark[4]
\And
Mathias Lechner\footnotemark[3] \footnotemark[4]
\And
Sergey Levine\footnotemark[2]
}
\begin{document}
\maketitle

\begin{abstract}
Large language models excel at reasoning but lack key aspects of \emph{introspection}, including the ability to anticipate their own success and the computation required to achieve it. Humans use real-time introspection to decide how much effort to invest, when to make multiple attempts, when to stop, and when to signal success or failure. Without this ability, LLMs struggle to make intelligent meta-cognition decisions. Test-time scaling methods such as Best-of-N drive up cost and latency by using a fixed budget of samples regardless of the marginal benefit of each one at any point in generation, and the absence of confidence signals can mislead people, prevent appropriate escalation to better tools, and undermine trustworthiness. Learned verifiers or reward models can provide confidence estimates, but do not enable adaptive inference and add substantial inference cost by requiring extra models or forward passes. We present ZIP-RC, which equips models with \underline{z}ero-overhead \underline{i}ntrospective \underline{p}redictions of \underline{r}eward and \underline{c}ost. At every token during generation, ZIP-RC reuses reserved or unused logits in the same forward pass as next-token prediction to output a joint distribution over final reward and remaining length—no extra models, architecture change, or inference overhead. This full joint distribution is used to compute a sampling utility which is the linear combination of the expected maximum reward, total compute, and latency of set of samples if generated to completion. During inference, we maximize this utility with meta-actions that determine which prefix of tokens to continue or initiate sampling from. On mixed-difficulty mathematical benchmarks, ZIP-RC improves accuracy by up to $12\%$ over majority voting at equal or lower average cost, and traces smooth Pareto frontiers between quality, compute, and latency. By providing real-time reward–cost introspection, ZIP-RC allows models to reason adaptively and more efficiently.
\end{abstract}

\begin{figure}[t]
  \centering
  \includegraphics[width=1.0\linewidth]{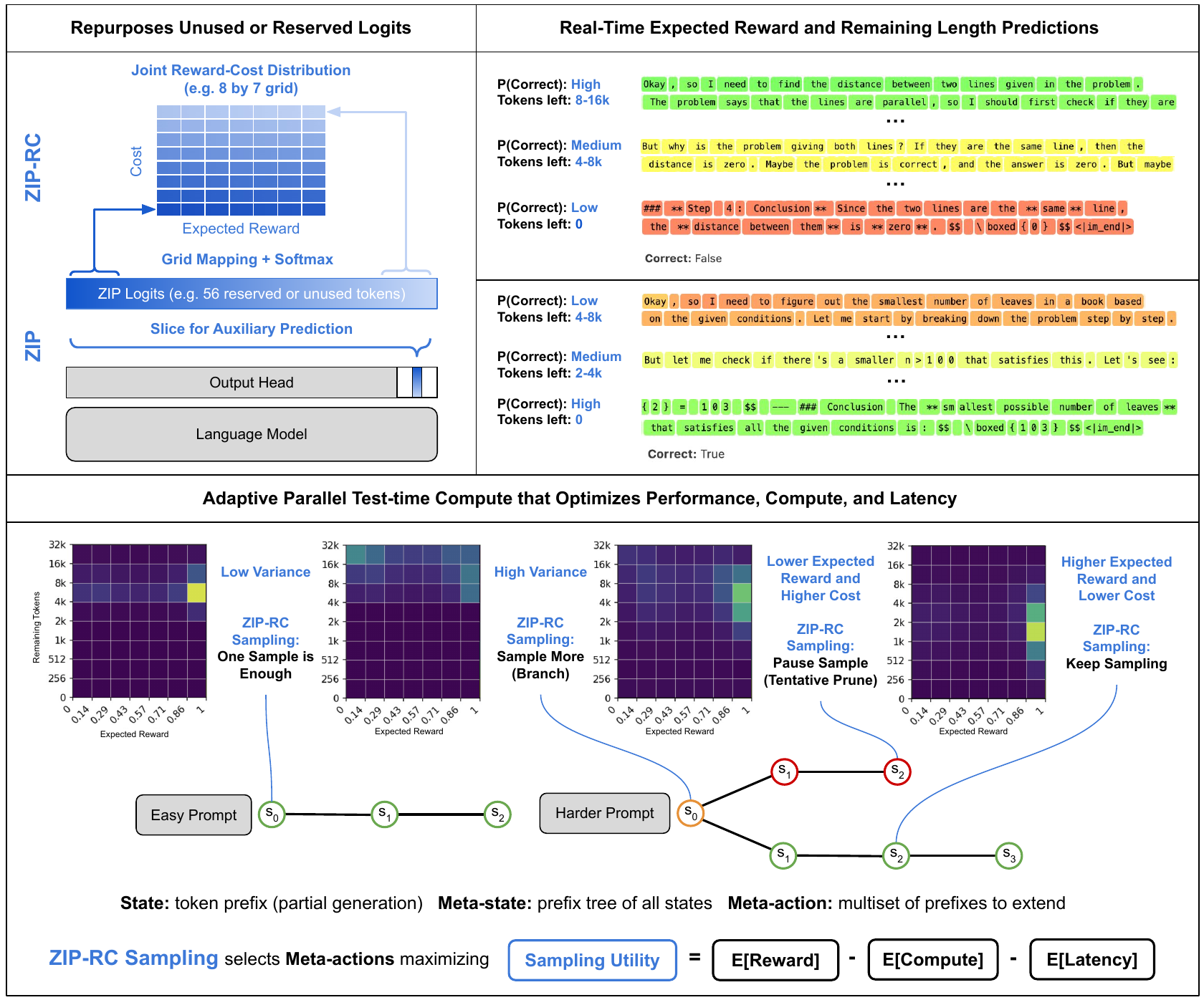}
  \caption{Top left shows how ZIP repurposes reserved or unused logits in the output head of a language model to instantiate auxiliary predictions, such as the grid mapping for the joint reward-cost distribution that ZIP-RC uses. Top right demonstrates how ZIP-RC can provide real-time expected reward and remaining length predictions. Finally, the bottom shows the joint distributions from ZIP-RC and how they indicate optimal sampling strategies. ZIP-RC sampling uses these joint distributions to calculate a sampling utility to autonomously select meta-actions for optimal test-time compute allocation.}
  \label{fig:zip_rc}
  \vspace{-0.1in}
\end{figure}

\vspace{-0.1in}
\section{Introduction}
\vspace{-0.1in}

The rapid evolution of large language models (LLMs) has enabled unprecedented capabilities in complex tasks ranging from general question-answering to automated coding and mathematical reasoning \citep{brown2020language, kojima2022large, wei2022chain}. To become truly reliable, however, LLMs must develop a capacity for \emph{introspection}: the ability to assess their own progress and anticipate the effort required to succeed. Humans can be instrospective and can effectively act upon this information to make better decisions. If a model could predict its future success (reward) \emph{and} the resources needed to achieve it (cost), it could allocate compute more effectively, expose likely failure modes before they occur, and provide transparent signals about confidence and anticipated “thinking time.” A key obstacle has been that such introspection typically requires auxiliary mechanisms that add nontrivial computational overhead and complexity.

The need for introspection is growing more urgent as reasoning traces continue to lengthen. Recent work shows that scaling \emph{test-time} compute through reasoning often yields larger performance gains than simply increasing model size \citep{wang2022self, yao2023tree, jaech2024openai, snell2024scaling, guo2025deepseek}. But performance has scaled only logarithmically with additional computation, forcing models to produce ever longer chains of thought—sometimes tens of thousands of tokens today and plausibly orders of magnitude more in the future \citep{wu2024inference}. With time as a fundamental limiting resource, a critical question is how to use a fixed wall-clock budget to achieve the highest performance possible.

A promising approach is the canonical test-time scaling method \emph{Best-of-N} (BoN) sampling, which generates $N$ candidates and selects the best using a learned verifier, reward model, or majority vote \citep{cobbe2021training, zheng2023judging, kwon2023reward, lightman2023let, wang2022self}. While appealing in theory due to its parallelism, BoN is not adaptive: every trajectory is carried to completion regardless of promise. On easy tasks this wastes computation, and on hard tasks it inflates latency, since wall-clock time is governed by the longest generation and both length and total compute grow with $N$ \citep{leviathan2023fast}. What is missing is a way for models to anticipate which samples are worth continuing and which should be paused or abandoned, so that parallel effort is concentrated on trajectories most likely to succeed and fastest to complete.

Early-stopping and pruning methods aim to reduce BoN’s inefficiency by terminating unpromising samples mid-generation \citep{fu2025deep, huang2025efficient}. These approaches are valuable first steps toward adaptivity, but they typically rely on \emph{scalar} signals—such as a confidence score from a classifier—or on simple heuristics. This creates two limitations. First, a scalar cannot capture the central reward–cost trade-off: a low-confidence trajectory may be worthwhile if nearly finished, while a high-confidence one may be impractical if it implies a long, costly continuation. Second, these methods do not quantify the marginal benefit of drawing more samples, which depends on the entire reward distribution rather than its expectation. As a result, such strategies can reduce compute in some cases but often fail to improve wall-clock time, falling short of the broader goal of enabling models to allocate compute adaptively—expending more effort on difficult queries and less on easy ones \citep{manvi2024adaptive, graves2016adaptive}.

We introduce ZIP-RC, which addresses these limitations by training language models to provide zero-overhead introspective predictions of the \emph{joint distribution} over reward and cost. At each decoding step, unused vocabulary logits parameterize a joint distribution over final reward and remaining generation length (see ~\cref{fig:zip_rc}). Access to the full joint—not just a scalar—enables order-statistic calculations that quantify the marginal utility of continuing partial samples or spawning additional samples. For example, when the predicted reward distribution has high variance, allocating more samples can substantially increase the expected maximum reward. We maximize a \emph{sampling utility} that explicitly balances accuracy, compute, and latency through a linear combination of their expectations. The coefficients of the linear combination can be tuned to the desired balance of reward, compute, and latency. Optimizing this utility produces the behaviors observed in our experiments: when latency is prioritized, ZIP-RC spawns larger pools of samples and schedules early pruning to chase an early finisher; when compute is prioritized, it deprioritizes low-value trajectories aggressively and allocates more samples only when they are likely to pay off.

Experiments on mixed-difficulty mathematical benchmarks show that ZIP-RC improves accuracy by up to $12\%$ over majority voting while using less average cost. By adjusting the utility coefficients, it traces smooth Pareto frontiers between accuracy, compute, and latency. We contribute ZIP and ZIP-RC which enable models to be introspective for more interpretable generations and maximize a sampling utility to improve performance with fixed compute and latency.

\vspace{-0.1in}
\section{Related Work}
\vspace{-0.1in}
Improving the efficiency and reliability of LLM reasoning requires both new methods for guiding generation and principled strategies for allocating computational resources at inference time. Our work builds on three key areas of research: the use of verifiers for response selection, process-level rewards for fine-grained feedback, and adaptive inference strategies for efficient computation.

\textbf{Verifiers and reward models for output selection.}
A common approach to enhancing LLM performance is to train an external verifier or reward model (RM) to assess the quality of complete responses. Such models provide outcome-based feedback, typically assigning a scalar score or probability of correctness to an entire output sequence. Outcome RMs have been widely used in reasoning and alignment works, from math problem solving to preference-based fine-tuning~\citep{cobbe2021training,yu2023ovm,stiennon2020learning}. They can be integrated during training, as in reinforcement learning settings~\citep{ouyang2022training,bai2022training}, or applied at inference time through selection strategies such as Best-of-N sampling~\citep{cobbe2021training,li2022competition}. Recent work has explored unifying the generator and verifier, using the model's own logits for certain tokens as a proxy for a reward model~\citep{ren2023self}. Our work extends this introspective direction, moving beyond scalar correctness prediction to modeling a joint distribution over the expected future reward and computational cost at every token.

\textbf{Process-based rewards for fine-grained feedback.}
A limitation of outcome-supervision is its reliance on a sparse reward signal that makes credit assignment challenging, especially for long reasoning chains. Process-based reward models (PRMs) instead score intermediate steps via human annotation~\citep{lightman2023let}, LLM-as-judge~\citep{zheng2023judging}, or automated token-level value estimates. These automated estimates can be generated by propagating final outcome rewards back to individual tokens (Liu et al., 2024) or through other value estimation techniques~\citep{uesato2022solving,luo2024improve}. While most PRMs aim to improve the training signal, our goal is distinct: we use predictive feedback in real time to guide inference itself. Closest to the calibration side of this literature, \cite{damani2025beyond} augment a binary correctness reward with a confidence score to improve model calibration. Our approach is complementary: rather than training for calibrated confidence, we predict a joint distribution over future reward and future cost, turning process-level signals into a direct control knob for utility-aware inference. 

\textbf{Adaptive inference and introspective models.}
Our work enables a form of adaptive inference, a long-standing goal in machine learning \citep{graves2016adaptive,bengio2015conditional} that has become increasingly critical for large models \citep{snell2024scaling}. Adaptive methods that use multiple models or sequential sampling have been explored \citep{damani2024learninghardthinkinputadaptive, Wang2024MakeEP}. A more recent direction has involved parallel sampling that includes the pruning of unpromising generation paths.  For instance, recent methods terminate samples based on mid-generation confidence scores\citep{manvi2024adaptive,fu2025deep} or prune exploration based on step-wise consistency checks~\citep{aggarwal2023let}. We advance this line of work with a more general formulation: instead of relying on simple heuristics for pruning, we use our joint reward-cost predictions to explicitly optimize a utility function. This enables a richer set of meta-actions, such as dynamically resizing the sample pool and reallocating budget across trajectories. Conceptually, our approach parallels the integration of value functions with search in reinforcement learning~\citep{silver2016mastering}, where predictive signals guide exploration. It is also complementary to inference optimization techniques like speculative decoding~\citep{leviathan2023fast}, which accelerate generation at the token level. By providing real-time estimates of success and cost, the predictions from ZIP-RC contribute to a broader vision of introspective models that report their internal states~\citep{binder2024looking,kadavath2022language}, enhancing efficiency and interpretability.

\vspace{-0.1in}
\section{Preliminaries}
\label{sec:preliminaries}
\vspace{-0.1in}

\paragraph{Generation as a token-level MDP.}
We formalize text generation as a finite-horizon Markov Decision Process (MDP), following \citet{ramamurthy2022reinforcement}. The MDP is defined by the tuple $M = (\mathcal{S}, \mathcal{A}, R, P, \gamma, H)$ over a finite vocabulary $\mathcal{V}$, where $\mathcal{S}$ is the state space, and $\mathcal{A}$ the action space, $R$ the reward function, $P$ the transition function, $\gamma \in [0, 1]$ the discount factor, and $H$ the horizon. Given an input prompt $\mathbf{x} = (x_0, \hdots, x_m)$ consisting of tokens in the vocabulary $x_i \in \mathcal{V}$, the initial state is $s_0 = \mathbf{x}$. At timestep $t$, the LLM acts as a policy $\pi(a_t|s_t)$ that outputs the probability distribution over actions $a_t \in \mathcal{V}$. The transition function $P$ deterministically appends $a_t$ to state $s_t$, yielding next state $s_{t+1} = (x_0, \hdots, x_m, a_0, \hdots, a_t)$.  The episode terminates when the model emits an end-of-sequence token \texttt{<EOS>} or the length of the generated sequence reaches the horizon $H$. Upon termination at timestep $T$, the environment returns a terminal reward $R(s_T)$. The discount factor is defined as $\gamma = 1$ and the value of any state $s_t$ under policy $\pi$ is the expected terminal reward from that state onward $V(s_t) = \mathbb{E}_\pi \left[R(s_T)\mid s_t \right]$ where $s_T$ is a random variable.

\paragraph{Best-of-$\mathbf{N}$.}
Best-of-$N$ (BoN) is an inference-time selection mechanism that decouples generation from evaluation to improve output quality. Given a prompt $\mathbf{x}$ and a generator policy $\pi$, the method draws $N$ independent and identically distributed (i.i.d.) terminated states $s_T^{(1)},\dots,s_T^{(N)}$ from the policy. A learned verifier $\hat V:\mathcal{V}^* \to \mathbb{R}$, typically a reward model, then assigns a scalar score to each terminated state. The final output is the state with the highest score, selected as
\begin{equation}
s_T^* \in \arg\max_{i \in [N]} \hat{V}(s_T^{(i)})\,.
\end{equation}
The selection depends only on the relative ordering of scores from $\hat V(\cdot)$, ties are broken arbitrarily.

\section{Zero-Overhead Introspective Prediction of Reward \& Cost}
\label{sec:zip}

We introduce Zero-overhead Introspective Prediction (ZIP), a method for extracting auxiliary signals during inference without extra models, architectural changes, or forward passes. ZIP repurposes the logits of a small set of reserved tokens to parameterize these auxiliary predictions within the same forward pass that generates the next-token probabilities. We then instantiate ZIP for reward and cost prediction (ZIP-RC).

\paragraph{Zero-overhead introspective prediction (ZIP).}
Let $\mathcal{V}$ be the vocabulary and $\mathcal{R} \subset \mathcal{V}$ a fixed contiguous set of \emph{reserved tokens}. 
At decoding step $t$, the model produces logits $z_t \in \mathbb{R}^{|\mathcal{V}|}$. ZIP interprets logits over $\mathcal{R}$ as parameters of an auxiliary predictor (e.g. via a softmax). A rough visualization of this is shown in the top right of ~\cref{fig:zip_rc}. Before sampling, these logits are masked to remove probability mass:
\begin{equation}
\pi_\theta(a_t \mid s_t) =
\begin{cases}
\dfrac{\exp(z_t[a_t])}{\sum_{v \in \mathcal{V} \setminus \mathcal{R}} \exp(z_t[v])}, & a_t \in \mathcal{V} \setminus \mathcal{R},\\[10pt]
0, & a_t \in \mathcal{R}.
\end{cases}
\end{equation}
Thus, each forward pass yields both (i) the decoding distribution on $\mathcal{V} \setminus \mathcal{R}$ and (ii) auxiliary predictions from $z_t[\mathcal{R}]$, incurring zero additional cost at inference time.

During training, we supervise the auxiliary head via a task-specific loss $\mathcal{L}_{\text{aux}}$ applied to $z_t[\mathcal{R}]$ (e.g., cross-entropy for categorical targets, Bernoulli NLL for binary targets, MSE for continuous targets), while regularizing the policy toward a frozen copy of the original policy $\pi$:
\begin{equation}
\mathcal{L}(s_t) = \mathcal{L}_{\text{aux}}(s_t) + \alpha_{\mathrm{KL}}\;\mathrm{KL}\!\left(\pi(\cdot \mid s_t) \;\middle\|\; \pi_\theta(\cdot \mid s_t)\right).
\label{eq:zip_rc}
\end{equation}
ZIP is agnostic to the prediction target or loss, it simply standardizes how auxiliary predictions are produced during inference, with zero inference overhead. An alternative that keeps the model frozen is discussed in ~\cref{sec:zip_rc_lite}. 

\paragraph{ZIP-RC: joint reward-cost distribution prediction.}
We use ZIP to predict a \emph{joint distribution} over the (expected) reward and remaining length of a rollout using $\pi$ starting from any prefix $s_t$. We can define the random variables
\begin{equation}
Z^{\pi}(s_t) = V(s_T) = \mathbb{E}\left[R(s_T)\right],
\qquad 
L^{\pi}(s_t) = |s_T| - |s_t|\,,
\end{equation}
that denote the expected terminal reward and the remaining length. In practice, we approximate $V(s_T)$ with a realized $\hat V(s_T)$ from a learned verifier.
We discretize the range of values into $B_V$ bins with boundaries
$\{v_b\}_{b=1}^{B_V+1}$ and lengths into $B_T$ bins with boundaries
$\{t_\ell\}_{\ell=1}^{B_T+1}$, assigning one reserved token per $(b,\ell)$ using index in the
output vocabulary $\mathcal{V}$ given by
$i_{b,\ell} = i_{\mathcal{R}} + (b-1)B_T + (\ell-1)$, where $i_{\mathcal{R}}$ is the index of the
first reserved token. Let $z_t^{\text{aux}}(b,\ell) \equiv z_t[r_{b,\ell}]$. The joint distribution is
\begin{equation}
p_\theta(b,\ell \mid s_t)
=
\frac{\exp(z_t^{\text{aux}}(b,\ell))}
{\sum_{b'=1}^{B_V}\sum_{\ell'=1}^{B_T} \exp(z_t^{\text{aux}}(b',\ell'))}.
\end{equation}

A rough visual representation of the grid mapping and examples of this learned distribution is shown in ~\cref{fig:zip_rc}. Given a completed trajectory $s_T$, for each timestep we construct training targets for each prefix
$s_t$ by computing $(b^*,\ell^*)$ such that
\begin{equation}
\hat V(s_T) \in [v_{b^*},\, v_{b^*+1}), \qquad
|s_T| - |s_t| \in [t_{\ell^*},\, t_{\ell^*+1}).
\end{equation}
Finally, we train with cross-entropy
$\mathcal{L}_{\text{aux}}(s_t) = -\log p_\theta(b^*, \ell^* \mid s_t)$,
together with the policy-preserving KL above. Other practical implementation details are discussed in ~\cref{sec:zip_rc_implementation}.

\paragraph{Why expected reward instead of realized reward?}
It may initially seem unnatural to use an estimated value $\hat{V}(s_T)$ by a trained critic rather than the realized reward.
To explain this choice, let $s_T^{(1)}, \dots, s_T^{(N)} \stackrel{\text{i.i.d.}}{\sim} \pi_\theta(\cdot \mid s_0)$ be completions for a prompt, and $\hat V$ the estimated value function used in BoN selection. The chosen index $s_T^* = \arg\max_i \hat V(s_T^{(i)})$ yields score
$
\hat V(s_T^*) =  \max_{i \in [N]} \hat V(s_T^{(i)})
$.
Modeling the distribution of possible terminal values $V^\pi(s_T)$ via $\hat V(s_T)$ rather than possible terminal rewards $R(s_T)$: (i) aligns with the actual selection objective, and (ii) admits closed-form order-statistic expectations since noisy environment rewards from $R(s_T)$ cannot be assumed to be independent but its expectation $V(s_T)$ can.

\paragraph{ZIP-RC for sample selection and interpretability.}
Using the learned joint distribution, we can also compute individual marginal distributions:
\begin{equation}
q_\theta^{V}(b \mid s_t)=\sum_{\ell=1}^{B_T} p_\theta(b,\ell \mid s_t), \qquad
q_\theta^{L}(\ell \mid s_t)=\sum_{b=1}^{B_V} p_\theta(b,\ell \mid s_t),
\end{equation}
which can be used to estimate the value and the expected remaining tokens to completion
\begin{equation}
V^\pi(s_t) = \mathbb{E}\!\left[Z^{\pi}(s_t)\right]
\approx\sum_{b=1}^{B_V} \frac{v_b + v_{b+1}}{2}\; q_\theta^{V}(b \mid s_t),
\qquad
\mathbb{E}\!\left[L^{\pi}(s_t)\right]
\approx\sum_{\ell=1}^{B_T}  \frac{t_\ell + t_{\ell+1}}{2}\; q_\theta^{L}(\ell \mid s_t).
\end{equation}
Here, the value estimation can be used for final sample selection and both the value and the expected remaining tokens to completion act as confidence and  “thinking time” signals. A rough visualization of the interpretable signals is shown in the top right of ~\cref{fig:zip_rc}.

\vspace{-0.1in}
\section{Test-time Compute using ZIP-RC (ZIP-RC sampling)}
\label{sec:search}
\vspace{-0.05in}

While large language models are post-trained to maximize the likelihood of high-reward generations, they remain imperfect policies due to finite data and compute. Low-reward completions are often sampled even when their deficiencies are apparent—either implicitly through low likelihood or explicitly via external reward models. Even greedy decoding (temperature 0) does not guarantee high-likelihood or high-reward outputs. Thus, one-shot sampling is insufficient for reliably accomplishing tasks. Test-time methods such as majority voting, BoN, Weighted BoN, and Pass@k show the alternative: by actively searching across multiple trajectories, they substantially outperform single-sample decoding. The performance gap highlights that the gain comes from active \emph{search}.  

Existing test-time methods, however, are heuristic and often inefficient. BoN can, in principle, explore as much as all other approaches with a large enough $N$, but this is impractical given compute and latency constraints. While more sophisticated search strategies like beam search try to explore more efficiently by allowing for intermediate branching and pruning at intervals, one could imagine removing constraints on search further. Though the goal of test-time search is clear---maximize task success while minimizing generation cost---prior methods do not achieve this in a principled way. Our goal is to propose a method that does. In this section, we introduce \emph{ZIP-RC sampling}, which leverages predictions from ZIP-RC to explicitly optimize generations for both success and cost.

\subsection{Search as Decision-Making Under a Meta-MDP.}

We first formalize the problem of test-time search as decision-making under a high-level MDP that we dub a \emph{meta-MDP}. We describe the components of the meta-MDP in detail below:

\paragraph{Meta-states.}
At timestep $t$, the meta-state is a prefix tree (trie) $S_t$ rooted at the prompt $\mathbf{x}$. Formally, $S_t=(\mathcal{N}_t,\mathcal{E}_t,r)$ where $\mathcal{E}_t \subseteq \mathcal{N}_t \times \mathcal{V} \times \mathcal{N}_t$ is the set of directed edges $(s,a,s')$ labeled by tokens $a\in\mathcal{V}$, $r$ is the root corresponding to the prompt, and each node $s\in\mathcal{N}_t$ represents the prefix given by the concatenation of the root and the token labels along the path from $r$ to $s$. Each node also corresponds to a state in the base MDP as it is a sequence of tokens that the base policy has generated. A prefix is finished if its last edge is the special token \texttt{<EOS>}, corresponding to the terminal state in the base MDP. Initially, $S_0=(\{r\},\emptyset,r)$ where $r$ is the root containing the prompt. Conceptually, the meta-state therefore encodes all prefixes processed or generated by the policy. In practice, the trie only requires space on the order of the total tokens generated to represent all sequences, and their corresponding prefixes can be stored in the KV cache.

\paragraph{Meta-actions.}
At step $t$, the meta-action selects a finite multiset of prefixes (nodes) $A_t \subseteq \mathbb{M}(\mathcal{N}_t)$ to continue sampling from. Multiplicity encodes branching: if prefix $s$ appears $r$ times in $A_t$, then $s$ is sampled from $r$ times independently. This definition encodes any viable single-token sampling and any strategy one might want to perform. If $s \in A_{t-1}$ but none of its children appear in $A_t$, this is equivalent to pausing or pruning. If $s\in A_t$ already has children, this is equivalent to backtracking.

\paragraph{Meta-transition function.}
Given $(S_t,A_t)$, for each occurrence of $s\in A_t$ we sample $a\sim \pi(\cdot\mid s)$ and add the edge $(s,a,s')$ and its child $s'$ to the trie, yielding $S_{t+1}$. For notational simplicity, we treat $\pi$ as part of the transition dynamics of the meta-MDP so the transition function $P(S_{t+1} \mid S_t,A_t)$ implicitly includes sampling from $\pi$. The process terminates at horizon $T$, corresponding to the maximum allowed search steps.

\paragraph{Meta-reward function.}
At each step we incur a cost
$
C(S_t,A_t)=\beta\big(\alpha\,|\mathrm{supp}(A_t)|+(1-\alpha)\big)
$,
where $\mathrm{supp}(A_t)$ is the set of distinct prefixes chosen in $A_t$ (one forward pass per unique prefix) and the second term accounts for step latency. The parameter $\alpha\in[0,1]$ balances compute versus latency, and $\beta>0$ sets the trade-off between reward and cost. At the terminal timestep $T$, one completed generation $s_T^*$ is selected, and the reward is the base MDP reward $R(s_T^*)$. Including this cost term is essential, since otherwise one could trivially maximize reward by always branching.

\paragraph{Search strategies as meta-policies.}
A strategy $\mu$ is a policy in this meta-MDP: at each timestep it maps the current prefix tree $S_t$ to a multiset $A_t=\mu(S_t)$ of nodes to expand, and at horizon $T$ selects a finished $f^*$. BoN corresponds to placing $N$ copies of the root in $A_0$ and thereafter always expanding every unfinished leaf until completion, finally selecting the highest-scoring candidate. Beam search with width $B$ instead enforces $|A_t|=B$ at all times: at most steps $A_t$ is just the $B$ current leaves, but at pruning intervals of length $k$ it ranks leaves by a score, discards the weakest $p$, and duplicates stronger ones so that the frontier is refilled back to $B$, thereby pruning and branching in a controlled manner before ultimately selecting the best finished prefix at $T$.

\subsection{Computing an Optimal Zero-overhead Search Strategy}
\label{sec:search_derivation}

It is clear from our formalism what an optimal search strategy should be: at every timestep $t$, the optimal strategy $\mu^*$ should choose the meta-action that maximizes:
\begin{equation}
\mu^*(S_t)
= 
\arg\max_{A_t} Q^{\mu^*}(S_t, A_t),
\label{eq:muhstar}
\end{equation}
where we define a \emph{meta} Q-function over the meta-MDP for a strategy $\mu$ as,
\begin{equation}
Q^{\mu}(S_t, A_t)
= 
\mathbb{E}_{\mu}\!\left[
R(s_T^*) - \sum_{t' = t}^{T-1} C(S_{t'}, A_{t'}) \mid S_t, A_t \right].
\label{eq:qmu}
\end{equation}
However, computing $Q^\mu$ for arbitrary strategies $\mu$ is often intractable, primarily because we cannot generate on-policy  trajectories from $\mu$ without incurring too much computational overhead.
\paragraph{The sampling utility.} To avoid having to generate rollouts, we consider a class of \emph{predefined} strategies at any timestep $t$ as follows:  
\begin{equation}
    M_t = \left\{\mu: \mu(\cdot \mid S_{t'}) = f\left(\{\pi(\cdot \mid s_t)\}_{s_t \in \mathcal{N}_t}\right)\, , \ \forall t' \geq t \right\}\,,
\end{equation}
where $f$ denotes any function of the set of next-token distributions for every prefix in the meta-state. Concretely, $M_t$ consists of all strategies where future meta-actions are determined entirely from generation behavior at timestep $t$. This essentially means for $\mu \in M_t$, we can compute its value $Q^\mu$ without explicitly executing $\mu$ over future timesteps.

For any meta-state and meta-action at timestep $t$, we define the \emph{sampling utility} to be the value of some strategy in the aforementioned class of strategies $M_t$. Because each strategy performs worse than optimal strategy $\mu^*$ due to the imposed constraint, we choose the best-performing strategy in $M_t$ to act as the tightest possible lower-bound
\begin{equation}
\mathcal{U}(S_t, A_t)
\;=\;
\max_{\mu \in M_t} Q^{\mu}(S_t, A_t)
\;\le\;
Q^{\mu^*}(S_t, A_t)\,.
\label{eq:q_bounds_1}
\end{equation}
We show later how this maximization over $M_t$, as well as computation of $Q^\mu$ for $\mu \in M_t$, can be done tractably using the quantities obtained via ZIP-RC, \emph{without any additional forward passes, auxiliary models, or architectural modifications} beyond standard decoding. 

Finally, ZIP-RC sampling is defined as the strategy that maximizes our proposed sampling utility:
\begin{equation}
\mu^{\text{ZIP-RC}}(S_t)
\;=\;
\arg\max_{A_t} \ 
\mathcal{U}(S_t, A_t)\,.
\label{eq:mu_hat}
\end{equation}

Intuitively, we can derive the following property of our learned strategy:
\begin{theorem}
At every timestep $t$, our strategy $\mu^{\text{ZIP-RC}}$ performs better than any predefined strategy $\mu \in M_t$. Namely, for any meta-state $S_t$, we have
\begin{equation}
Q^{\text{ZIP-RC}}(S_t, \mu^{\text{ZIP-RC}}(S_t))
\;\ge\;
Q^{\mu}(S_t, \mu(S_t))\,, \ \forall \mu \in M_t\,.
\label{eq:ziprc_vs_Mt}
\end{equation}
\end{theorem}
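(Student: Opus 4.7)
The plan is to derive the inequality by first establishing the stronger pointwise bound $Q^{\text{ZIP-RC}}(S_t, A_t) \ge \mathcal{U}(S_t, A_t)$ for every meta-action $A_t$, and then chaining this with the definition $\mu^{\text{ZIP-RC}}(S_t) = \arg\max_{A_t} \mathcal{U}(S_t, A_t)$ together with the trivial upper bound $Q^\mu(S_t, A_t) \le \mathcal{U}(S_t, A_t)$ that follows from $\mathcal{U}$ being a maximum over $M_t$. The second step is immediate; the content of the theorem really lives in the first.

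To prove $Q^{\text{ZIP-RC}}(S_t, A_t) \ge \mathcal{U}(S_t, A_t)$, I would argue by backward induction on $t$. The base case $t=T$ is immediate since no further meta-actions remain and both sides collapse to the same terminal selection. For the inductive step, I would unroll both quantities via the Bellman recursion: $Q^{\text{ZIP-RC}}(S_t, A_t) = -C(S_t, A_t) + \mathbb{E}_{S_{t+1}}[Q^{\text{ZIP-RC}}(S_{t+1}, \mu^{\text{ZIP-RC}}(S_{t+1}))]$, while $\mathcal{U}(S_t, A_t) = -C(S_t, A_t) + \max_{\mu \in M_t} \mathbb{E}_{S_{t+1}}[Q^\mu(S_{t+1}, \mu(S_{t+1}))]$. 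Since the per-step cost depends only on $(S_t, A_t)$, it cancels, and the problem reduces to comparing the two continuation terms.

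The key structural observation is the nesting $M_t \subseteq M_{t+1}$: any strategy expressible as a function of the next-token distributions at nodes in $\mathcal{N}_t$ is also expressible as a function of those in $\mathcal{N}_{t+1} \supseteq \mathcal{N}_t$, since the trie only grows. Combined with the inductive hypothesis applied inside the expectation, this gives $\mathbb{E}[Q^{\text{ZIP-RC}}(S_{t+1}, \mu^{\text{ZIP-RC}}(S_{t+1}))] \ge \mathbb{E}[Q^\mu(S_{t+1}, \mu(S_{t+1}))]$ for every $\mu \in M_t$, and taking the maximum over $M_t$ preserves the inequality. Substituting back yields $Q^{\text{ZIP-RC}}(S_t, A_t) \ge \mathcal{U}(S_t, A_t)$, and the theorem then follows from the chain $Q^{\text{ZIP-RC}}(S_t, \mu^{\text{ZIP-RC}}(S_t)) \ge \mathcal{U}(S_t, \mu^{\text{ZIP-RC}}(S_t)) = \max_{A_t}\mathcal{U}(S_t, A_t) \ge \mathcal{U}(S_t, \mu(S_t)) \ge Q^\mu(S_t, \mu(S_t))$ for any $\mu \in M_t$.

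The main obstacle I anticipate is pinning down the inclusion $M_t \subseteq M_{t+1}$ and confirming that $\mu^{\text{ZIP-RC}}$ is adaptive in the sense required — namely that it re-queries the ZIP-RC logits at each timestep rather than committing to a fixed plan at time $t$. A related subtlety is the direction of the $\max$--$\mathbb{E}$ exchange: the inductive step effectively lets the winning $\mu$ inside the expectation depend on the realized $S_{t+1}$, which would be illegal for a strategy that must be fixed based on timestep-$t$ information alone but is exactly the adaptivity that $\mu^{\text{ZIP-RC}}$ exploits. Once this adaptivity gap is articulated, the rest is a routine application of Bellman backups and the definitions of $\mathcal{U}$ and $\mu^{\text{ZIP-RC}}$.
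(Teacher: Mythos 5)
Your proposal is correct and follows the same backward-induction-with-Bellman-unrolling strategy that the paper sketches; in fact the paper's own proof breaks off after a single line of the recursion, and your version supplies exactly the pieces it omits --- the intermediate bound $Q^{\text{ZIP-RC}}(S_t,A_t)\ge\mathcal{U}(S_t,A_t)$, the nesting $M_t\subseteq M_{t+1}$ that lets the inductive hypothesis be applied to each $\mu\in M_t$ pointwise inside the expectation over $S_{t+1}$, and the final chain through $\max_{A_t}\mathcal{U}(S_t,A_t)$. The only point worth tightening is the base case: at $t=T$ the claim holds only if $\mu^{\text{ZIP-RC}}$'s terminal selection rule is at least as good as that of any strategy in $M_T$, which is true here because both select the finished prefix maximizing the predicted value, but it is not quite ``immediate.''
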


\begin{proof}
We can prove this via induction on $t$. Naively, this holds for terminal timestep $t = T$. For any $\mu \in M_t$, we let $A_t^{\text{ZIP-RC}} = \mu^{\text{ZIP-RC}}(S_t)$ and $A_t^\mu = \mu(S_t)$. Then, we have
\begin{align}
    Q^{\text{ZIP-RC}}(S_t, A_t^{\text{ZIP-RC}}) &= Q^{\text{ZIP-RC}}(S_{t+1}, A_{t+1}^{\text{ZIP-RC}}) - C(S_t, A_t^{\text{ZIP-RC}})\,.
\end{align}
\end{proof}

Therefore, ZIP-RC sampling is a powerful test-time search strategy that explicitly optimizes for reward and generation cost by adaptively reallocating compute at each step.

\paragraph{Approximating the sampling utility.}
To approximate the sampling utility, we aim to answer two questions: (1) for every meta-state $S_t$ and action $A_t$, how do we search for a strategy $\mu \in M_t$ that achieves a high value $Q^\mu(S_t, A_t)$, and (2) how do we compute $Q^\mu(S_t, A_t)$ tractably using only predictions by ZIP-RC.

First, let us consider the naive strategy $\mu^{\mathrm{Rollouts}}$ of always selecting the unfinished leaf-node descendants of the prefixes in the current action $A_t$, or in other words, obtaining \emph{rollouts} or generations using $\pi$ starting from each selected prefix. At the end,  $\mu^{\mathrm{Rollouts}}$ selects the generation with the highest value $V^{\pi}(s_T)$, similar to BoN. Its meta-MDP state-action value is exactly given by:
\begin{align}
Q^{\mathrm{Rollouts}}(S_t, A_t)
&= 
\mathbb{E}_{\mu^{\mathrm{Rollouts}}}\!\left[
R(s_T^*) - \sum_{t' = t}^{T-1} C(S_{t'}, A_{t'}) \mid S_t, A_t \right] \\
&= 
\mathbb{E}\!\left[
\max_{s \in A_t} Z^{\pi}(s)
-
\beta\!\left(
\alpha \sum_{s \in A_t} L^{\pi}(s)
+
(1 - \alpha)\max_{s \in A_t} L^{\pi}(s)
\right)
\right] \nonumber\\[4pt]
&=
\mathbb{E}\!\left[\max_{s \in A_t} Z^{\pi}(s)\right]
-
\beta\!\left(
\alpha\,\sum_{s \in A_t}\mathbb{E}\!\left[L^{\pi}(s) \right]
+
(1-\alpha)\,\mathbb{E}\!\left[\max_{s \in A_t} L^{\pi}(s) \right]
\right).
\label{eq:qbon}
\end{align}

We can observe that the expression $Q^{\mathrm{Rollouts}}$ contains several interpretable quantities. Namely, the expected maximum value quantifies the marginal benefit of branching or pruning; the incremental gain from increasing $N$ is large when the value distribution has high variance, and conversely, the marginal loss of pruning is small under low variance.
Furthermore, the expected maximum remaining tokens and the expected total tokens capture the marginal cost of branching; increasing $N$ always increases the expected total remaining tokens and the maximum remaining length, which drives up latency, and pruning will always reduce the cost.

While $\mu^{\mathrm{Rollouts}}$ has several nice properties, the strategy itself is naive as it assigns maximum cost for every new sample and does not consider that those samples can be pruned in the future. 
This is exacerbated further by the empirical correlation between the length of reasoning traces and the likelihood that they are incorrect. Being able to ``bet" on an early finishing sample that has high reward is crucial.
To remedy this problem, we introduce an additional parameter into the meta-value that enables the $\mu^{\mathrm{Rollouts}}$ strategy to prune each sample in the future at a predefined horizon. 

Formally, let $Q^{\mathrm{Rollouts}}(S_t, A_t; \mathcal{H}_t)$ be the value of executing $\mu^{\mathrm{Rollouts}}$, with the additional capability that each active prefix $s \in A_t$ will stop generating upon reaching length $h_s \in \mathcal{H}_t$, where $\mathcal{H}_t$ is a set of lengths of size $|A_t|$. 
We can now define an improved lower bound by maximizing over $\mathcal{H}_t$:
\begin{equation}
Q^{\mathrm{Rollouts}}(S_t, A_t; \mathcal{H}_t^*) = \max_{\mathcal{H}_t} Q^{\mathrm{Rollouts}}(S_t, A_t; \mathcal{H}_t)
\;\ge\;
Q^{\mathrm{Rollouts}}(S_t, A_t)\,,
\label{eq:q_bounds}
\end{equation}
where it is easy to see that the our new meta-value is monotonically better than the value of the naive $\mu^{\mathrm{Rollouts}}$ strategy without pruning capabilities.
We use this to approximate the sampling utility:
\begin{equation}
\mathcal{U}(S_t, A_t)
\;=\;
Q^{\mathrm{Rollouts}}(S_t, A_t; \mathcal{H}_t^*)\,.
\label{eq:utility_def}
\end{equation}

\paragraph{Tractable computation of expectations.}
Next, we show how our sampling utility in Equation~\ref{eq:utility_def} can be computed tractably using only predictions by ZIP-RC.
Predicting $Q^{\mathrm{Rollouts}}$ is straightforward since we can estimate distributions over the value $q_\theta^{V}(\cdot | s_t)$ and remaining-tokens $q_\theta^{L}(\cdot | s_t)$ conditioned on each prefix $s_t$ using our previously proposed ZIP-RC. Hence, we can estimate by computing:

\begin{align}
\mathbb{E}\!\left[\max_{s \in A_t} Z^{\pi}(s) \right]
&\approx\sum_{b=1}^{B_V} \frac{v_b + v_{b+1}}{2} \big(F_{\theta}^{V,\max}(b \mid A_t) - F_{\theta}^{V, \max}(b-1 \mid A_t)\big),\\[3pt]
\mathbb{E}\!\left[L^{\pi}(s) \right]
&\approx\sum_{\ell=1}^{B_T} \frac{t_\ell + t_{\ell+1}}{2}\, q_\theta^{L}(\ell \mid s),\\[3pt]
\mathbb{E}\!\left[\max_{s \in A_t} L^{\pi}(s) \right]
&\approx\sum_{\ell=1}^{B_T} \frac{t_\ell + t_{\ell+1}}{2} \big(F_\theta^{L,\max}(\ell \mid A_t) - F_{\theta}^{L, \max}(\ell-1 | A_t)\big),
\label{eq:expectations}
\end{align}

\noindent
where
\begin{align}
F_{\theta}^{V, \max}(b \mid A_t) &= \prod_{s \in A_t} F^{V}_\theta(b \mid s), &
F_{\theta}^{L, \max}(\ell \mid A_t) &= \prod_{s \in A_t} F_\theta^{L}(\ell \mid s),\\[3pt]
F_\theta^{V}(b \mid s) &= \sum_{j \le b} q_\theta^{V}(j \mid s), &
F_\theta^{L}(\ell \mid s) &= \sum_{j \le \ell} q_\theta^{L}(j \mid s).
\end{align}
Now, to incorporate the predefined horizons $\mathcal{H}_t$ over all prefixes $A_t$, we modify the joint distribution $p_\theta(b,\ell | s)$ to a \emph{capped} joint $p_\theta(b,\ell | s; h_s)$ that collapses probability mass beyond the cap $h_s$ into a designated ``clipped'' state $(b_0, h_s)$:
\begin{equation}
p_\theta(b,\ell \mid s; h_s)
=
\begin{cases}
p_\theta(b,\ell \mid s), & \ell \le h_s,\\[6pt]
\mathbf{1}\{b=b_0,\,\ell=h_s\}
\displaystyle\sum_{b'}\sum_{\ell' > h_s} p_\theta(b',\ell' \mid s), & \ell > h_s~.
\end{cases}
\end{equation}
This construction ensures that all probability mass corresponding to continuations exceeding the allowed horizon is reassigned to a truncated state at $\ell = h_s$, while the value component is collapsed to the designated base bin $b_0$ to reflect the forfeited reward from pruning.

From the capped joints, we can recover the corresponding marginal distributions:
\begin{align}
q_\theta^{V}(b \mid s; h_s)
&= \sum_{\ell=1}^{B_T} p_\theta(b,\ell \mid s; h_s), &
q_\theta^{L}(\ell \mid s; h_s)
&= \sum_{b=1}^{B_V} p_\theta(b,\ell \mid s; h_s).
\label{eq:capped_joint}
\end{align}
These capped marginals directly encode the expected effect of planned pruning on both value and remaining length for each prefix. Notice that this is only possible by modeling the joint distribution as ZIP-RC is defined and is not possible with only the two marginal distributions. Thus, we demonstrate that we are able to compute our sampling utility in Equation~\ref{eq:utility_def} using only our zero-overhead predictions from ZIP-RC.

\paragraph{Summary.}
ZIP-RC sampling defines a meta-policy that, at each meta-state $S_t$, selects the meta-action $A_t$---a multiset of prefixes to expand for one decoding step---that maximizes the \emph{sampling utility} in \cref{eq:utility_def}. This utility is the state--action value of the best policy in the predefined strategy class $M_t$, whose future behavior is fixed. Because ZIP-RC sampling re-optimizes $A_t$ at every timestep, it adapts online to the stochastic evolution of the prefix tree: if current trajectories are projected to be costly or low-value, it can immediately redirect computation elsewhere. As formalized in \cref{eq:ziprc_vs_Mt}, this dynamic strategy is guaranteed to perform at least as well as any predefined policy in $M_t$.

To approximate the sampling utility tractably, we use the value of the best rollouts-with-pruning strategy $Q^{\mathrm{Rollouts}}(S_t, A_t; \mathcal{H}_t^*)$, in which each prefix may continue only up to an optimized (and possibly distinct) horizon. This value can be computed in closed form using ZIP-RC’s joint reward--cost predictions. Concretely, for every prefix $s \in A_t$, we: (i) obtain its predicted joint distribution $p_\theta(b,\ell \mid s)$; (ii) apply the prefix-specific horizon using the capped construction in \cref{eq:capped_joint}; and (iii) compute the expectations of the required order statistics using \cref{eq:expectations}. This yields a fully tractable, zero-overhead estimate of the sampling utility for any candidate meta-action.

\paragraph{Practical considerations.}
Prompts differ widely in their typical reasoning-trace lengths, so the cost coefficient $\beta$ may be normalized per prompt using the procedure in \cref{app:sampling_implementation} to ensure stable reward--cost tradeoffs. Because the full meta-action space is large and joint optimization over per-prefix horizons is combinatorial, we use several practical reductions---restricting candidate prefixes, prioritizing high-value/low-length nodes, and employing a shared horizon---as detailed in \cref{app:sampling_implementation}. Finally, ZIP-RC predictions incur zero additional \emph{model} compute; ZIP-RC sampling adds only lightweight CPU-side computation to evaluate the sampling utility.

\vspace{-0.1in}
\section{Experiments}
\label{sec:experiments}
\vspace{-0.1in}

Our experiments aim to test the following hypotheses:
\begin{itemize}[noitemsep,topsep=-4pt]
\item[(1)] ZIP-RC can accurately predict the joint reward-cost distribution. 
\item[(2)] ZIP-RC sampling can be tuned to balance between output quality, and compute cost and latency, tracing a Pareto frontier over the quantities over strong inference baselines.
\item[(3)] ZIP-RC sampling is adaptive and generalizes across tasks of varying difficulty and across models of varying size.
\end{itemize}
We will describe and present results that provide positive evidence for each hypothesis individually. 

\begin{figure}[t]
  \centering
  \begin{minipage}{\linewidth}
    \centering
    \includegraphics[width=\linewidth]{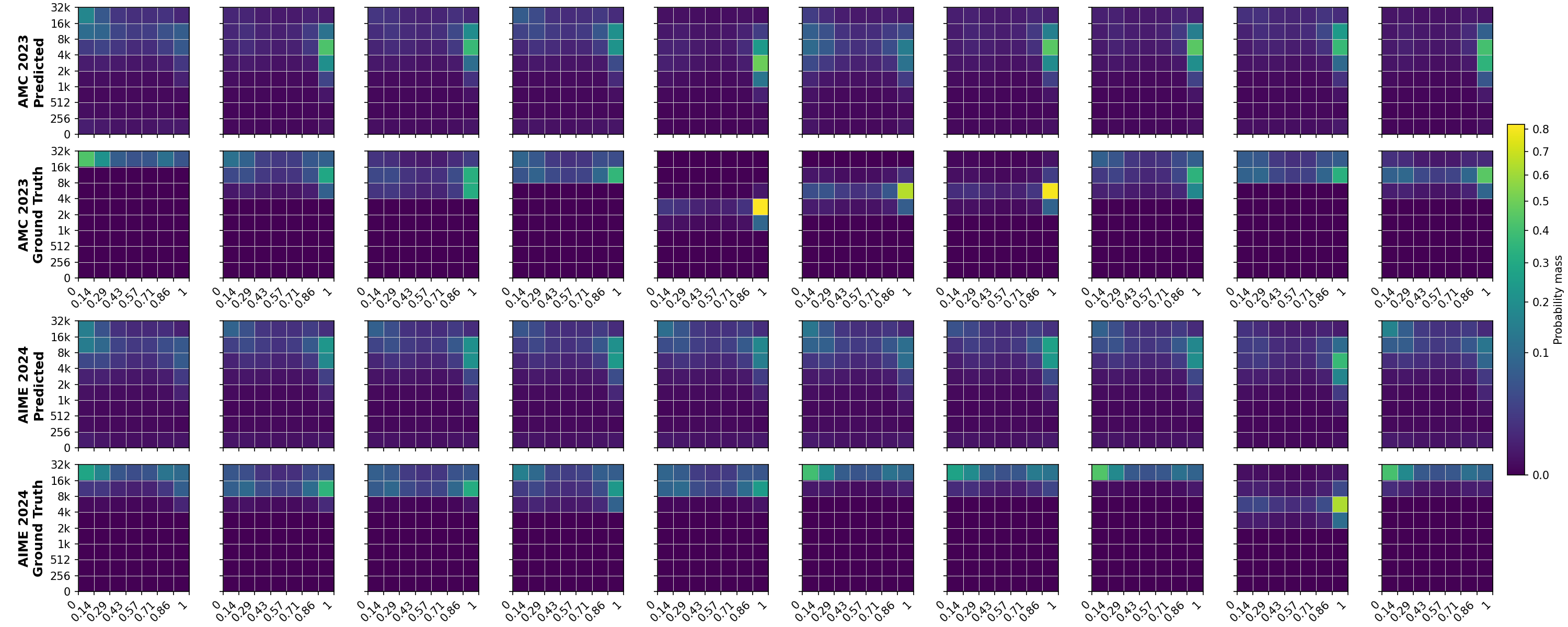}
    \caption{Predictions and ground truth for the initial joint distributions of 10 questions randomly sampled from the AMC 2023 benchmark and 10 questions from the AIME 2024 benchmark. The ground truth for each prompt was estimated with 256 rollouts from Qwen3-1.7B, and predictions were made using ZIP-RC trained with the same model. This shows that the joint distribution from ZIP-RC is calibrated and relatively accurate in forecasting the outcomes of its own rollouts.}
    \label{fig:first_joint_visual}
  \end{minipage}
  \begin{minipage}{\linewidth}
  \vspace{1.0em}
    \centering
    \begin{tabular}{lcccc}
    \toprule
    & \multicolumn{1}{c}{Beginning (Reward{+}Cost)} & \multicolumn{3}{c}{End (Reward)} \\
    \cmidrule(lr){2-2} \cmidrule(lr){3-5}
    Model & Total Variation & F1 Score & Accuracy & Recall (Incorrect) \\
    \midrule
    Qwen3-1.7B    & 0.46 & 0.91 & 0.88 & 0.82 \\
    LFM2-1.2B     & 0.45 & 0.91 & 0.87 & 0.69 \\
    LFM2-350M     & 0.48 & 0.80 & 0.82 & 0.87 \\
    \bottomrule
    \end{tabular}
    \captionof{table}{Prediction accuracy of ZIP-RC at the beginning and end of generation. At the beginning, no ground-truth reward or remaining-length label exists due to stochastic decoding, so we evaluate the joint reward-cost prediction using Total Variation. At the end of generation, the ground-truth reward is known, allowing us to report F1 score, accuracy, and incorrect-answer recall using a threshold of 0.5.}
    \label{tab:reward-pred}
  \end{minipage}
  \vspace{-0.2in}
\end{figure}

\begin{figure}[t]
  \centering
  \includegraphics[width=1.0\linewidth]{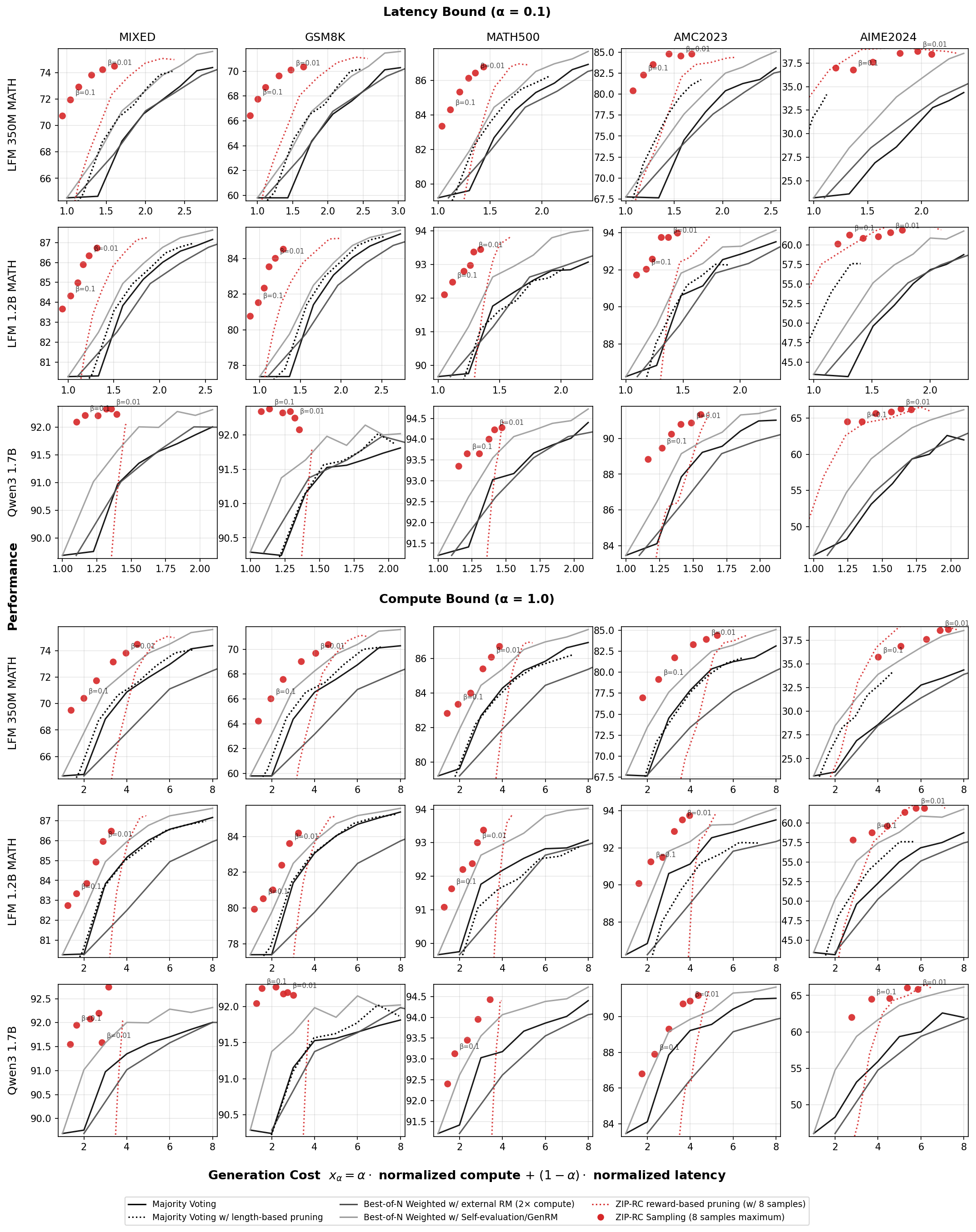}
  \caption{Performance of ZIP-RC sampling and baselines across all models and benchmarks. The top half demonstrates the latency bound setting where $\alpha = 0.1$, and the bottom half demonstrates the compute bound setting where $\alpha = 1.0$. Adjusting $\beta$ in ZIP-RC sampling allows it to trade generation cost for higher performance (similar to increasing $N$ in BoN) while adjusting $\alpha$ allows it to adjust the prioritization of compute and latency. By navigating the Pareto frontier and allocating generation cost adaptively, ZIP-RC sampling significantly outperforms majority voting and other baselines.}
  \label{fig:concat_all_models}
  \vspace{-0.2in}
\end{figure}

\begin{table*}[t]
\small
\begin{tabular}{@{}l@{\hskip 2pt}l@{\hskip 2pt}c|
                c@{\hskip 4pt}c@{\hskip 4pt}c@{\hskip 4pt}c|
                c@{}} 
\toprule
Model & Method & Gen. Cost & AIME2024 & AMC2023 & MATH-500 & GSM8K & Mixed \\
\midrule
\multirow{6}{*}{Qwen3-1.7B} 
  & ZIP-RC sampling & 1.43 & \textbf{65.8} & \textbf{90.9} & \textbf{94.1} & \textbf{92.2} & \textbf{92.2} \\
  & Majority Voting & 1.40 & 53.1 & 87.9 & 93.0 & 91.2 & 91.0 \\
  & MV length-prune & 1.46 & 25.1 & 58.5 & 84.7 & 91.6 & 88.0 \\
  & Weighted BoN ext. RM & 1.43 & 54.7 & 86.5 & 92.6 & 91.4 & 91.0 \\
  & Weighted BoN Self-eval & 1.40 & 59.4 & 89.1 & 93.6 & 91.6 & 91.6 \\
  & ZIP-RC reward prune & 1.33 & 43.3 & 86.0 & 90.3 & 89.6 & 88.9 \\
\midrule
\multirow{6}{*}{LFM2-1.2B} 
  & ZIP-RC sampling & 1.35 & \textbf{60.9} & \textbf{93.5} & \textbf{93.4} & 83.6 & \textbf{86.0} \\
  & Majority Voting & 1.60 & 49.6 & 90.6 & 91.8 & 81.4 & 83.8 \\
  & MV length-prune & 1.70 & 51.3 & 89.8 & 91.6 & 83.0 & 84.9 \\
  & Weighted BoN ext. RM & 1.53 & 50.3 & 89.0 & 91.1 & 79.8 & 82.5 \\
  & Weighted BoN Self-eval & 1.60 & 55.1 & 91.8 & 92.6 & 82.5 & 84.9 \\
  & ZIP-RC reward prune & 1.49 & 57.5 & 90.2 & 92.5 & \textbf{83.8} & 85.8 \\
\midrule
\multirow{6}{*}{LFM2-350M} 
  & ZIP-RC sampling & 1.49 & \textbf{38.8} & \textbf{83.9} & \textbf{86.1} & \textbf{70.1} & \textbf{74.1} \\
  & Majority Voting & 1.70 & 26.9 & 74.5 & 82.7 & 64.4 & 68.8 \\
  & MV length-prune & 1.66 & 28.3 & 74.8 & 83.6 & 66.5 & 70.6 \\
  & Weighted BoN ext. RM & 1.59 & 28.5 & 73.4 & 81.9 & 63.2 & 67.8 \\
  & Weighted BoN Self-eval & 1.70 & 31.4 & 77.6 & 84.4 & 66.8 & 71.1 \\
  & ZIP-RC reward prune & 1.27 & 21.7 & 69.7 & 83.2 & 63.0 & 67.8 \\
  \bottomrule
\end{tabular}
\caption{Performance and generation cost at \(\alpha=0.1\) under matched-cost configurations. ZIP-RC sampling uses \(\beta=0.01\) and a maximum of eight samples. MV uses three samples; MV length-prune uses four; Weighted Bo\(N\) Self-eval (GenRM) uses three; Weighted Bo\(N\) with external RM uses two; and ZIP-RC reward prune uses a \(0.4\) threshold with eight samples.}
\vspace{-0.2in}
\label{tab:alpha01-metrics}
\end{table*}

\vspace{-0.1in}
\subsection{Experimental Setup}
\vspace{-0.1in}

\textbf{Models.}
We use three open models spanning capability and scale: \emph{Qwen3-1.7B} (Alibaba) in reasoning mode~\citep{yang2025qwen3technicalreport}; \emph{LFM2-1.2B Math} (Liquid AI), a compact mathematical-reasoning model~\citep{lfm22015}; and \emph{LFM2-350M Math}, a smaller variant targeting efficient math reasoning. Unless stated otherwise, decoding is identical across methods; ZIP-RC modifies only the sampling policy at inference time.

\textbf{Training data for ZIP-RC and baselines.}
We construct a mathematical training corpus by combining DeepScaleR~\citep{deepscaler2025}, the MATH training split~\citep{hendrycksmath2021}, and the GSM8K training split~\citep{cobbe2021training}. For each prompt, we generate two on-policy rollouts per model, yielding roughly 100k rollouts in total. We then label each rollout for correctness against the ground-truth answer. These labeled rollouts are used to train model-specific ZIP-RC predictors as well as any learned baselines.

\textbf{Baselines.}
We evaluate against the following baselines that consist of popular sampling strategies that fall under the parallel sampling paradigm where multiple candidate samples are generated in parallel and there is some selection method. Other notable paradigms include beam search or self-refinement. However, we use parallel sampling methods, which are the most commonly used and reported as they do not suffer from collapsing diversity issues that arise from branching and generating with similar prefixes or the ballooning latency issues from methods that generate samples sequentially. We use stronger adaptations of Best-of-\(N\) (Bo\(N\)), and an ablation of ZIP-RC that performs pruning without the sampling utility optimization and instead uses the expected reward directly.:

\begin{itemize}[noitemsep,topsep=-4pt]
\item[(1)] \emph{Majority Voting (MV)} (self-consistency), which selects the most frequent final answer, breaking ties uniformly at random~\citep{wang2023selfconsistencyimproveschainthought}. This is an extremely common method since it does not require any learned verifier.
\item[(2)] \emph{MV with length-based pruning}, which discards very long, potentially looping samples (cut at 8k tokens). This baseline acts as a sanity check to see if our latency gains only come from preventing looping samples from generating to the maximum 32k generation length.
\item[(3)] \emph{Weighted Bo\(N\) with external RM}, which scores each sample with a separate reward model trained on the same math corpus; because the RM reprocesses the full sequence without KV cache, FLOPs roughly double relative to generation alone~\citep{li2023makinglargelanguagemodels}. This baseline demonstrates strong performance that goes beyond Best-of-N sampling.
\item[(4)] \emph{Weighted Bo\(N\) with self-evaluation (GenRM)}, which replaces the external RM with trained self-evaluations derived from the generator~\citep{manvi2024adaptive, zhang2025generativeverifiersrewardmodeling, mahan2024generativerewardmodels}. We specifically include this baseline as it is another method that uses less compute than external reward models for selection.
\item[(5)] \emph{ZIP-RC with reward-based pruning}, which starts with a fixed pool and prunes any trajectory whose predicted expected reward falls below a threshold using ZIP-RC’s real-time signal. This acts as a natural and strong ablation to our sampling utility optimization as it directly prunes weak samples that have less promise than those with high expected reward.
\end{itemize}

\textbf{Benchmarks.}
We report performance on \emph{AIME 2024}, \emph{AMC 2023}, \emph{MATH-500}~\citep{lightman2023lets}, and \emph{GSM8K}. We additionally evaluate on a \emph{Concatenated Mixed-Difficulty Benchmark} formed by concatenating the above, which probes adaptive allocation across difficulties.

\textbf{Metrics.}
First and foremost we measure accuracy on each benchmark as it is an obvious and good measure for performance and high-quality responses. Beyond performance, we measure efficiency and latency.
\emph{Normalized compute} reports total FLOPs per prompt normalized by the FLOPs of a single-sample generation for that prompt. We compute FLOPs with the standard \(2N\) rule (proportional to the sum of input and generated tokens) and account for KV caching where applicable.
\emph{Normalized best-case latency} measures the lower bound on wall-clock time as the maximum number of sequential forward passes across the candidate set; with unconstrained data-parallel sampling, latency is governed by the longest trajectory.
\emph{Generation cost} aggregates these via a linear combination, \(\text{GenCost} = \alpha \cdot \text{NormCompute} + (1-\alpha) \cdot \text{NormLatency}\). Unless otherwise specified, we use \(\alpha=0.1\), which roughly balances compute and latency in typical parallel regimes (e.g., eight parallel samples often behave like two to three serial generations in practice). For ZIP-RC sampling we sweep \(\beta\), which trades off expected quality against cost in the utility; when reporting matched-cost comparisons we set \(\beta=0.005\) and cap the pool at $8$ samples for fair comparison to other baselines.

\vspace{-0.1in}
\subsection{Accuracy of ZIP-RC’s real-time predictions}
\vspace{-0.1in}

ZIP provides auxiliary predictions with zero overhead, but for this to be useful they must be reliable. We first visually validate whether the joint reward-cost distribution predictions from ZIP-RC are reasonable. To do so, we first obtain ground truth estimates of the joint distributions at the start of generation on AMC 2023 + AIME 2024, which exhibit nontrivial error rates and diverse reasoning trace lengths. The ground truth estimates are derived with 256 rollouts from Qwen3-1.7B and the predictions are made using ZIP-RC trained with the same model. From the 10 random examples from each benchmark in ~\cref{fig:first_joint_visual} we can see that the predictions are calibrated and relatively accurate in forecasting the distribution of outcomes.

To quantitatively validate the accuracy of the predictions, we measure the total variation at the beginning of generation using the same ground truth joint distribution estimates, as well as standard classification metrics for reward prediction using a threshold of $0.5$. As seen in ~\cref{tab:reward-pred}, the total variations from the ground truth confirm the visual validation that the predicted distributions are relatively close to the ground truth, and the reward prediction at the end of generation further confirms this; it demonstrates high accuracy in terms of F1 Score, accuracy, and recall for incorrect answers (using a threshold of 0.5). Overall, these results indicate that ZIP-RC and ZIP predictions can be calibrated and accurate despite being done in the same forward pass as next-token prediction. 

\subsection{Tracing the quality–compute–latency frontier}

We next test whether maximizing the sampling utility with specific cost coefficients achieves controllable tradeoffs. At each decision point, ZIP-RC evaluates meta-actions that serve three complementary purposes. First, initiating new samples only when necessary and avoiding continuing to generate low-value trajectories saves compute, which is reflected in the compute bound setting in the bottom half of ~\cref{fig:concat_all_models} ($\alpha=1.0$), where ZIP-RC achieves compute savings. Second, penalizing the continued sampling of long outliers avoids samples that would dominate latency. Third, expanding the initial pool of samples while planning to use a near-term maximum horizon enables the search to pursue early finishers without paying the full wall-clock cost of long runs. These mechanisms together drive the latency savings observed in the top half of ~\cref{fig:concat_all_models} ($\alpha=0.0$). In both settings $\beta$ is successfully used similar to $N$ in BoN in order to increase performance for more generation cost. Parameters $\alpha$ and $\beta$ together thus provide simple control knobs over compute–latency emphasis and quality–cost trade-off.

Across both $\alpha$ regimes, ZIP-RC sampling traces smooth Pareto frontiers that strictly dominate MV across benchmarks and scales validating that a single utility can jointly improve quality, compute, and latency. When $\alpha=0.1$ (latency-emphasis), it substantially reduces cost, with the largest relative reduction observed on LFM2-350M (up to roughly 40\%). Because we cap at eight samples, the frontier saturates once pass@8 performance is reached for a given \(\beta\).

\subsection{Adaptive inference with ZIP-RC sampling}

Finally, we compare ZIP-RC sampling against all baselines at matched generation cost with $\alpha=0.1$ in ~\cref{tab:alpha01-metrics}. Two patterns emerge: (i) at fixed cost, ZIP-RC improves accuracy relative to MV and weighted Bo\(N\) baselines; (ii) it allocates more samples to harder instances (AIME/AMC) and to weaker models, while pruning aggressively on easier problems or stronger models.

At matched cost, ZIP-RC sampling improves accuracy over MV and weighted Bo\(N\) on all models and benchmarks. On harder subsets such as AIME 2024, gains reach up to 12\% absolute while using less average cost. The adaptive policy naturally uses more samples when the predicted reward distribution is high-variance—where the expected benefit of best-of-\(N\) is greatest—and conserves compute when one trajectory is expected to be dominant. This pattern is evident on the mixed-difficulty benchmark (left-most column in ~\cref{fig:concat_all_models}) and across model scales: weaker models and harder tasks receive more samples, leading to higher overall accuracy.

\textbf{Takeaways.} ZIP-RC’s real-time predictions are accurate and reliable enough to enable principled search during decoding. This yields (i) reliable mid-generation detection of weak or overlong trajectories, (ii) smooth and tunable Pareto frontiers between quality, compute, and latency, and (iii) adaptive allocation that consistently outperforms fixed-budget Best-of-\(N\) at the same or lower cost.

\vspace{-0.1in}
\section{Conclusion}
\vspace{-0.1in}
We introduced ZIP-RC, a zero-overhead framework for introspective inference that predicts future reward and cost by repurposing existing logits. This enables principled, real-time decoding search during inference, yielding up to $12\%$ absolute accuracy gains over strong Best-of-N baselines at a lower average cost, while tracing a smooth Pareto frontier between quality, compute, and latency. These findings open natural extensions, such as applying it to diverse domains and testing fully dynamic resource allocation across different models and reasoning modes. Ultimately, ZIP-RC marks a conceptual shift from rigid, heuristic-based scaling to principled, utility-aware inference. By empowering models to anticipate their success and computational cost, our work is a key step toward more autonomous, reliable, and efficient LLMs. A limitation of our method is that we rely on LLMs achieving sufficient diversity of samples during inference; namely, if we double the number of initial samples, but the new samples are not sufficiently different, then our method and any similar test-time compute method like BoN is unable to achieve higher performance.   
We believe an important direction of future work is investigating how to improve diversity of samples during inference, potentially by using a mixture of prompts or even models.
Overall, we believe ZIP-RC establishes a strong foundation for the next generation of introspective models and provides a timely, impactful contribution to adaptive test-time scaling. 
\newpage 



\bibliography{iclr2026_conference}
\bibliographystyle{iclr2026_conference}

\newpage
\appendix
\section{Appendix}

\subsection{ZIP-RC Implementation Details}
\label{sec:zip_rc_implementation}

\paragraph{Remaining-token discretization.}
For the joint ZIP-RC head, we discretize the remaining-length variable $L_T^{\pi}(s_t)$ using logarithmic bins. Let $\{t_\ell\}_{\ell=1}^{B_T+1}$ denote the bin boundaries, and define bin $\ell$ as $[t_\ell,\, t_{\ell+1})$ with representative value $(t_\ell + t_{\ell+1})/2$. To obtain fine resolution for short continuations while keeping $B_T$ small, we collapse all very short lengths into a single initial “startup’’ bin and set the remaining boundaries to grow as powers of two. This construction preserves precision where it matters while limiting the number of reserved tokens required for ZIP-RC.

\paragraph{KL weight.}
In practice, the KL term is a very small component of the total loss because the policy remains close to the original policy as seen in ~\cref{fig:kl_ablation}. Accordingly, we use a relatively large coefficient $\alpha_{\mathrm{KL}}$, typically in the range $10$–$100$.

\begin{figure}[h]
  \centering
  \includegraphics[width=0.6\linewidth]{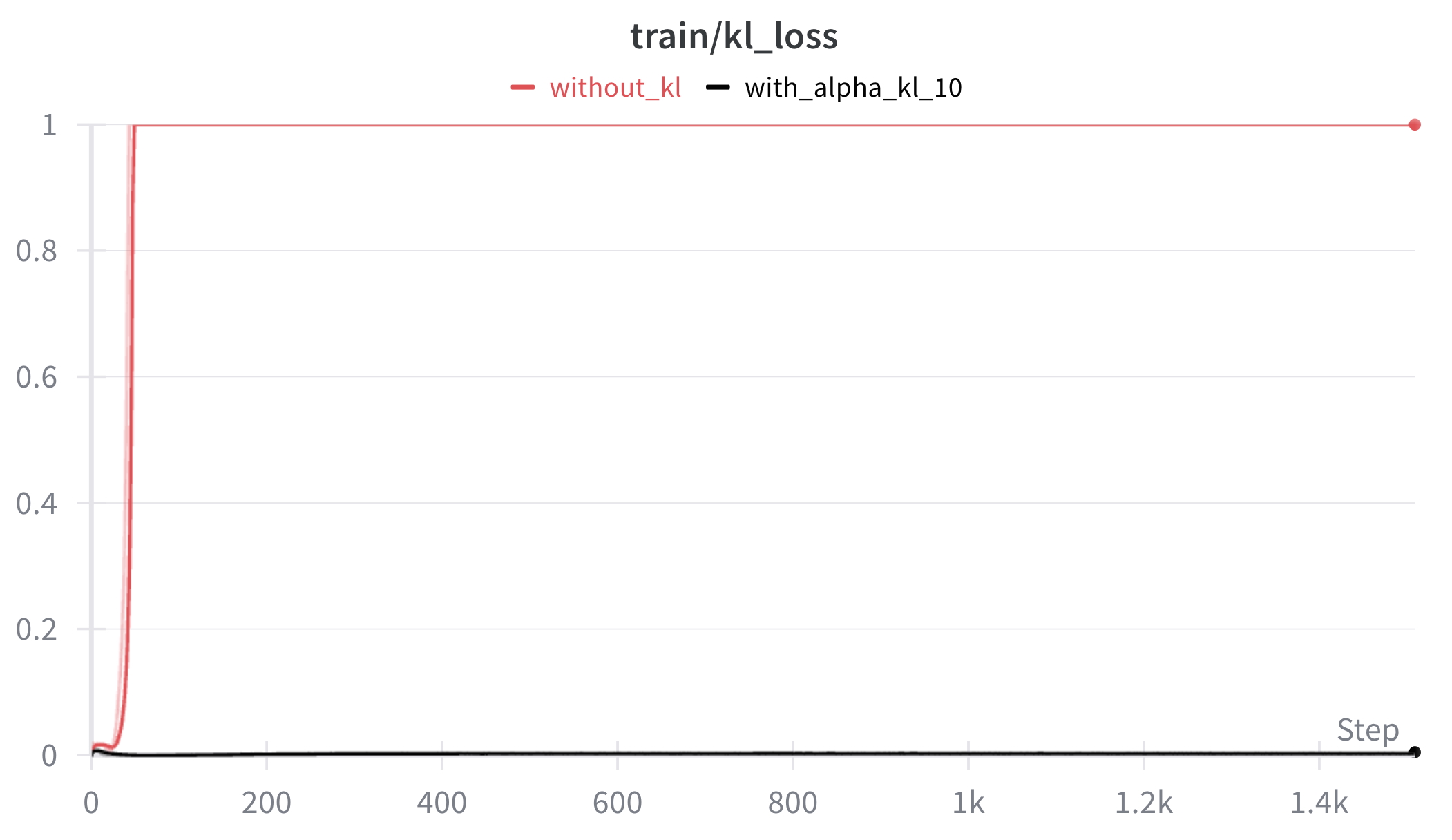}
  \caption{KL divergence from the original policy during training of ZIP-RC with and without the KL term. Using $\alpha_{\mathrm{KL}} = 10$ keeps the KL nearly zero throughout training, stabilizing around $0.005$. Without the KL term, the policy eventually changes, emphasizing the importance and effectiveness of this component of the ZIP objective. We used the same training data as in our main experiments.}
  \label{fig:kl_ablation}
  \vspace{-0.1in}
\end{figure}

\paragraph{Temporal smoothing.}
Token-level ZIP-RC predictions can be noisy. At inference time we optionally smooth the joint by averaging the last $W$ predictions along the current trajectory: for a prefix $s_t$ we set
\begin{equation}
\bar p_\theta(b,\ell \mid s_t)
=
\frac{1}{W} \sum_{w=0}^{W-1} p_\theta(b,\ell \mid s_{t-w}),
\end{equation}
where the sum runs over the previous non-terminal prefixes on the same path. We use $\bar p_\theta$ in place of $p_\theta$ when computing the ZIP-RC marginals and the sampling utility.

\subsection{ZIP-RC sampling Implementation Details}
\label{app:sampling_implementation}

\paragraph{Normalization of cost term $\beta$.}
Because rollout lengths can differ by orders of magnitude across prompts, we rescale the cost coefficient $\beta$ by a per-prompt estimate of the typical total token count. At decision step $t$, we use the normalized coefficient
\begin{equation}
\tilde{\beta}_t \;=\; \frac{\beta}{\bar B_t},
\qquad\text{where}\qquad
\bar B_t \;=\; \frac{1}{|A_t|}\sum_{s\in A_t}\big(|s| + \mathbb{E}[L_T^{\pi}(s)]\big).
\end{equation}
Here $|s|$ is the current length of prefix $s$, and $\mathbb{E}[L_T^{\pi}(s)]$ is its predicted remaining tokens from ZIP-RC. This keeps the reward–cost tradeoff stable across prompts with very short or very long generations.

\paragraph{Practical reduction of the search space.}
The full meta-action space over multisets of nodes in the prefix tree is extremely large, and jointly optimizing per-prefix horizons is combinatorial. In our implementation we therefore operate within a structured subclass of meta-actions and horizons. First, at timestep $t$ we restrict candidate meta-actions to multisets over the root and the unfinished leaves of $S_t$ with multiplicity only allowed at the root, and further downselect to a small set of prefixes prioritized by higher predicted value and lower predicted remaining length under ZIP-RC. Second, when computing the sampling utility we use a single shared horizon $h_t$ for all active prefixes, rather than independent horizons per prefix, reducing the search over pruning schedules to a one-dimensional search over~$h_t$. Finally, instead of recomputing $\mu^{\text{ZIP-RC}}(S_t)$ at every token, we update the meta-action only at fixed intervals and simply continue all currently active prefixes in between. These design choices substantially reduce the search space while preserving an expressive and adaptive family of strategies. This represents just one concrete instantiation of our framework; many other reductions are possible.

\subsection{ZIP-RC-Lite}
\label{sec:zip_rc_lite}

We add an ablation we refer to as ZIP-RC-Lite where we use the ZIP-RC objective described in ~\cref{eq:zip_rc} with the KL term removed, keeping only the output head of the language model trainable while freezing the rest of the model. As shown in ~\cref{fig:first_joint_visual_freeze} and ~\cref{tab:baseline-pred}, we find that ZIP-RC-Lite is able to non-trivially predict the joint distribution but unsurprisingly struggles to do so as accurately as ZIP-RC. Its predictions are poorly calibrated and may not be suitable for interpretability of the generation process and output. Despite this, we find that ZIP-RC-Lite search provides substantial gains with respect to baselines in the latency-bound setting with $\alpha = 0.1$, suggesting that predicting the expected reward and remaining length to any degree is useful for pruning overly long trajectories as seen in the first row of ~\cref{fig:zip_lite}. However, we do find that ZIP-RC-Lite substantially over-allocates compute in the compute-bound setting with $\alpha = 1.0$ due to overestimating the variance of the expected reward, resulting in poor efficiency as seen in the second row of ~\cref{fig:zip_lite}. Overall, while ZIP-RC is clearly more calibrated and accurate, ZIP-RC-Lite is a compelling alternative if one does not want to keep the whole model trainable.

\begin{figure}[h]
  \centering
  \includegraphics[width=1.0\linewidth]{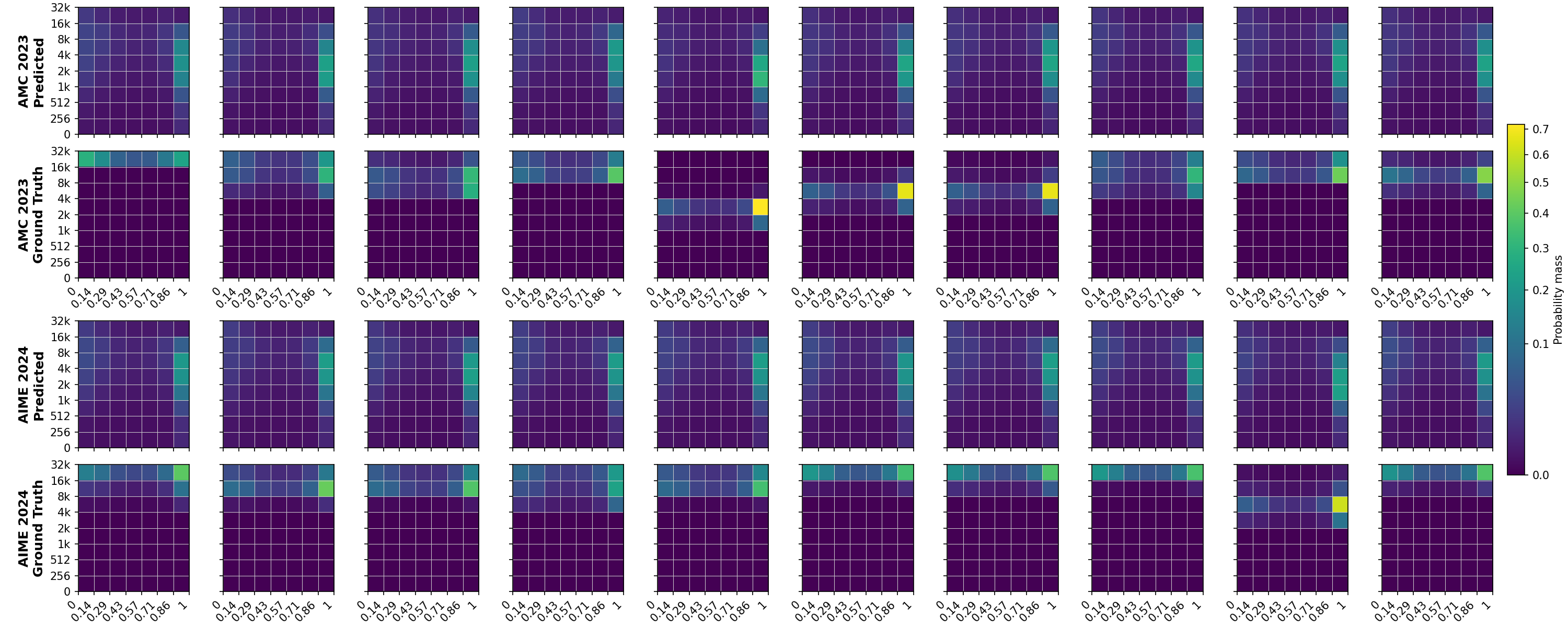}
  \caption{Similar to the demonstration of ZIP-RC's joint distribution prediction in ~\cref{fig:first_joint_visual}, we visualize the joint distribution predictions from ZIP-RC-Lite and compare them with ground truth estimates. While the predictions correlate with the ground truth, ZIP-RC-Lite tends to produce more similar-looking distributions across prompts and overestimates variance compared to ZIP-RC.}
  \label{fig:first_joint_visual_freeze}
  \vspace{-0.1in}
\end{figure}

\begin{table}[h]
\centering
\begin{tabular}{lcccc}
\toprule
& \multicolumn{1}{c}{Beginning (Reward{+}Cost)} & \multicolumn{3}{c}{End (Reward)} \\
\cmidrule(lr){2-2} \cmidrule(lr){3-5}
Method & Total Variation & F1 Score & Accuracy & Recall (Incorrect) \\
\midrule
ZIP-RC    & 0.46 & 0.91 & 0.88 & 0.82 \\
ZIP-RC-Lite     & 0.63 & 0.82 & 0.71 & 0.12 \\
\bottomrule
\end{tabular}
\caption{Similar to the evaluation of ZIP-RC in ~\cref{tab:reward-pred}, we show the prediction accuracy of ZIP-RC-Lite at the beginning and end of generation. The results indicate that ZIP-RC-Lite predicts the joint distribution non-trivially, but less accurately than ZIP-RC.}
\label{tab:baseline-pred}
\end{table}

\begin{figure}[h]
  \centering
  \includegraphics[width=1.0\linewidth]{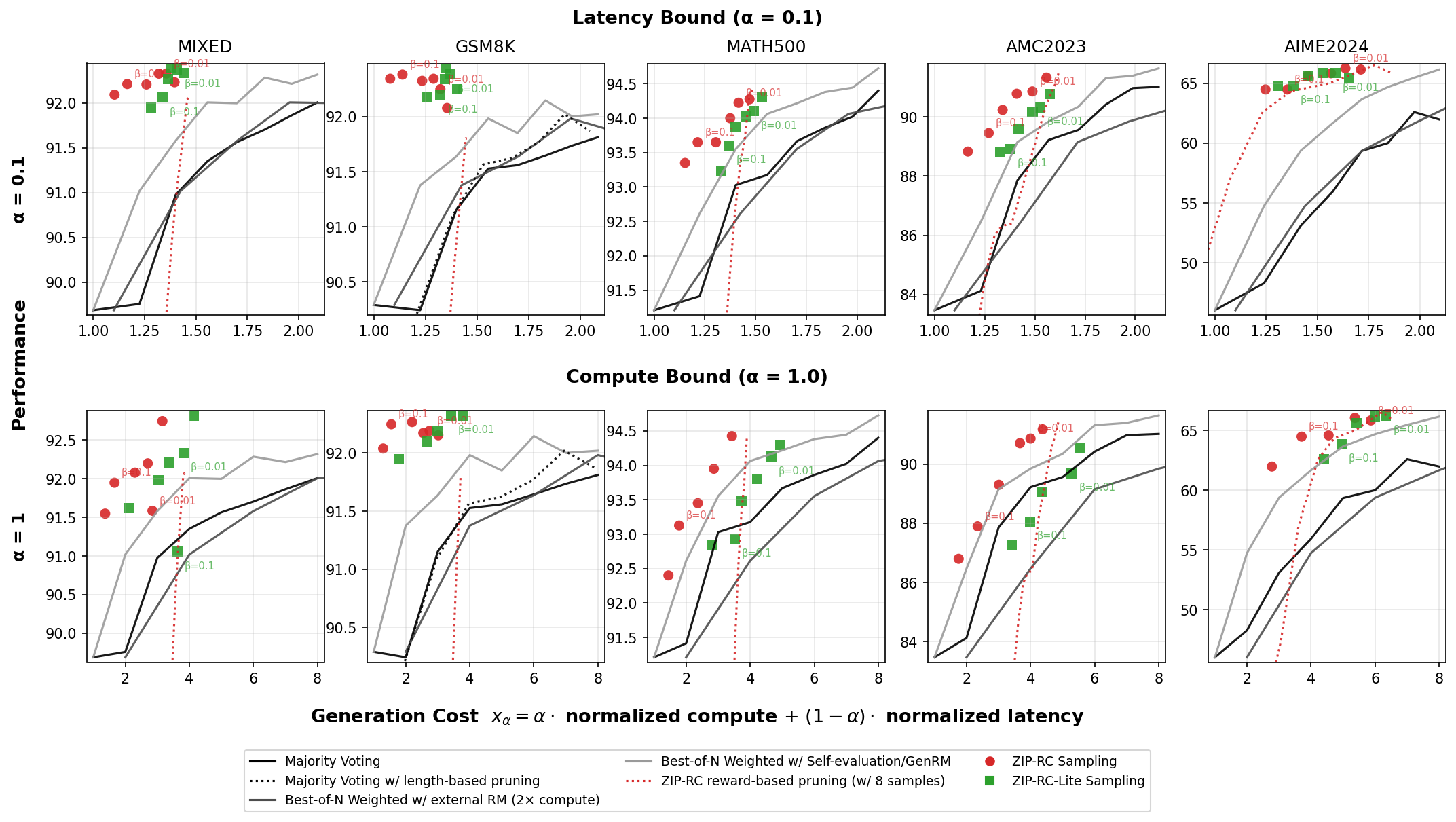}
  \caption{Similar to the demonstration of ZIP-RC sampling in ~\cref{fig:concat_all_models}, we present results with ZIP-RC-Lite search in green, indicating it is able to provide significant gains, albeit lower than ZIP-RC sampling, especially in the compute-bound setting. This suggests that predicting the joint reward-cost distribution to any non-trivial degree is helpful in allocating test-time compute more optimally. However, the results in the compute-bound setting indicate that ZIP-RC-Lite's overestimation of variation in the expected reward distribution results in over-allocation of compute.}
  \label{fig:zip_lite}
  \vspace{-0.1in}
\end{figure}

\end{document}